\newcommand{\hide}[1]{}    
\newtheorem{counterExample}{Counter example?}
\newtheorem{theorem}{Theorem}            
\newcommand{\generate}{\textsf{Gen}}
\newcommand{\update}{\textsf{Update}}
\newcommand{\support}{\ensuremath{\Pi}}
\newcommand{\bquartet}{\textsf{BQuad}}
\newcommand{\cond}{\textsf{Cond}}
\newcommand{\logcon}{\textsf{L}}
\newcommand{\bbase}{\textsf{BBase}}
\newtheorem{definition}{Definition}
\newcommand{\assoc}{\textsf{Assoc}}
\newcommand{\lit}{\textsf{Lit}} 
\newcommand{\exclude}{\textsf{Exc}}
\newcommand{\props}{\textsf{Props}}
\newcommand{\andMeta}{\ensuremath{\wedge^{\dagger}}}
\newcommand{\orMeta}{\ensuremath{\vee^{\dagger}}}
\newcommand{\visible}{\textsf{Visible}}
\title{Latent Belief Theory 
    and Belief Dependencies: 
    A Solution to the Recovery Problem 
in the Belief Set Theories}  
\author{Ryuta Arisaka\\ryutaarisaka@gmail.com}
\begin{document}

\maketitle

\begin{abstract}   
    The AGM recovery postulate says: assume a set of propositions $X$; 
  assume that it is consistent and 
  that it is closed under logical consequences; 
  remove 
  a belief $P$ from the set minimally, 
  but make sure that the resultant set is again 
  some set of propositions $X'$ which 
  is closed under the logical 
  consequences; 
  now add $P$ again 
  and 
  close the set under the logical consequences; 
  and we should 
  get a set of propositions 
  that contains 
  all the propositions that were in $X$. 
  This postulate has 
  since met objections; 
  many have observed 
  that it could bear counter-intuitive 
  results. Nevertheless, the attempts
  that have been made so far to amend  
  it either recovered the postulate in full, 
  had to relinquish the assumption of 
  the logical closure altogether, 
  or else had to introduce fresh controversies of 
  their own. We provide a solution to 
  the recovery paradox in this work. 
  Our theoretical basis is the 
  recently proposed belief theory with latent beliefs 
  (simply the latent belief theory 
  for short). 
  Firstly, through an example, 
  we will illustrate that 
  the vanilla latent belief theory 
  can be made more expressive. 
  We will identify that a latent belief, 
  when it becomes visible, may remain visible 
  only while the beliefs 
  that triggered it into 
  the agent's consciousness 
  are in the agent's belief set. 
  In order that such situations 
  can be also handled,  we will enrich 
  the latent belief theory with 
  belief dependencies among attributive beliefs, 
  recording the information as to 
  which belief is supported of 
  its existence by which beliefs. 
  We will show that 
  the enriched latent belief theory does not possess 
  the recovery property. The closure 
  by logical consequences is maintained 
  in the theory, however. Hence 
  it serves as a solution to 
  the open problem in 
  the belief set theories. 
\end{abstract}  
\section{Introduction}     
The belief theory with latent beliefs, 
the latent belief theory for short, 
was recently proposed \cite{Arisaka15perception}. 
In the framework, every evidence $\{P\}^\diamond$ 
is a collection of propositions, 
consisting of one primary proposition 
$P$ and zero or more attributive propositions 
expressed in triples: $P(P_1, P_2)$ for some 
$P_1$ and $P_2$. Each $P(P_1, P_2)$ is basically $P_2$ in any environment that contains $P_1$; otherwise, it is 
a latent belief not presently visible, 
despite its existence, within the environment.
What we have called an environment is, 
in the particular setting of the 
belief theory, a set of propositions and triples 
associated to them. Since they characterise 
the beliefs held by a rational agent, 
an environment is representative of the state of the mind 
of a rational agent's, which 
we may then just call a belief set, as 
in the AGM belief theory \cite{Makinson85}. 
A logical  
closure property holds in the latent belief 
theory: if $P_1, \ldots, P_n$ are in a belief set, 
then any proposition that is a logical consequence 
of any one or any ones in conjunction of 
them is also in the belief set. 
But because the belief sets in the latent  
belief theory 
could also contain those triples,  
they hold 
more information in general than a belief set 
in a traditional belief set theory does. 
To illustrate the point of the triples, suppose that 
$\{P\}^\diamond$ 
consists of $P, P(P_1, P_2)$ and $P(P_3, P_4)$.  
Suppose that an agent believes  
$P_1$, i.e. his/her belief set contains $P_1$. Then $P(P_1, P_2)$ is basically 
$P_2$ to the agent; and $P_2$ is in the belief set. 
But suppose that it does not contain $P_3$; 
then it is not necessarily 
the case that $P_4$ is in the set. 
Suppose that $P_4$ is not in the set, then 
$\{P, P_2\}$ will be 
the agent's perception of 
$\{P\}^\diamond$. Nonetheless, if $P_3$ is 
added to the set, then  
the agent's perception of 
$\{P\}^\diamond$ will be $\{P, P_2, P_4\}$. 
As this example illustrates, the latent 
belief theory captures the dynamic nature 
of a belief/knowledge within the mind of a rational 
agent's. Some constituents of 
$\{P\}^\diamond$ are visible, 
some others may be latent, depending on 
what beliefs are visible to 
his/her conscious mind. \\
\indent Let us contemplate upon the triples. 
In \cite{Arisaka15perception}, 
a latent belief, once it becomes visible 
to an agent, will acquire the equal 
significance in footing to any other visible beliefs 
that he/she holds. In particular, 
if $P(P_1, P_2)$ is latent to him/her, 
and if $P_2$ becomes visible, 
then contraction of  his/her belief set    
by $P$ does not necessarily entail
the loss of $P_2$. There are many 
scenarios that justify the particular behaviour. 
Consider the 
following propositions.\footnote{This 
    example is sketched out of  
Detective Conan.}
\begin{multicols}{2}
    \begin{enumerate}[leftmargin=0.4cm] 
    \item $P_1$: The nerdy-looking 
        boy is Conan.   
    \item $P_2$: There was a high school 
        kid, Shinichi Kudo, who was a renowned detective. 
    \item $P_3$: Conan is Shinichi Kudo.  
\end{enumerate}
\end{multicols}
Suppose the following structure
for $\{P_1\}^\diamond$: 
$\{P_1\}^\diamond = (\{P_1\}, \{P_1(P_2, 
    P_3)\})$, having 
the primary proposition $P_1$ as well as 
one attributive proposition $P_1(P_2, P_3)$.
It is not the case 
that $P_1$ implies $P_2$ or $P_3$. Neither 
is it the case 
$P_2$ or $P_3$ $P_1$. 
Now, suppose an agent who, 
among all the other propositions, believes 
$P_1$, but does not believe either of $P_2$ and $P_3$.
That is, suppose that his/her perception 
of $\{P_1\}^\diamond$ is $\{P_1\}$. 
When he/she learns $P_2$, 
then $P_3$ is triggered 
into his/her mind.
His/her perception 
of $\{P_1\}^\diamond$ is now $\{P_1, P_3\}$. 
Let us say that his/her 
belief set is then contracted by $P_1$. But there is 
no reason that $P_3$ must be also dropped off, 
even though it was attributive to $P_1$ when 
it was latent. He/she does not believe
$\{P_1\}^\diamond$ any more; but 
he/she will still believe $P_3$, 
(or $\{P_3\}^\diamond$ which 
includes $P_3$). 
\subsection{The need for tracking 
    belief dependencies among attributive 
    beliefs} 
However, there are other cases where 
the dependency should carry over. Consider 
the following proposition $\{P_1\}^\diamond$: {\it Belief changes can be characterised 
    in logic.} Suppose 
the following propositions. 
\begin{multicols}{2}
    \begin{enumerate}
        \item $P_1$:    
            Belief changes can be characterised 
            in the AGM belief theory. 
         \item $P_2$: Minimal removal 
             of beliefs is not a random operation. 
         \item $P_3$: Other postulates 
              to the existing AGM postulates 
              characterise belief retention more realistically. 
    \end{enumerate}
\end{multicols} 
Suppose that $\{P_1\}^\diamond 
= (\{P_1\}, \{P_1(P_2, P_3)\})$. 
Suppose some rational agent who knows about 
the basic AGM postulates but who does not know 
about the supplementary postulates.  Suppose that 
the agent perceives $\{P_1\}$ for $\{P_1\}^\diamond$. From some psychology magazine, 
he/she perceives $\{P_2\}^\diamond$. Let 
us say for simplicity that $\{P_2\}^\diamond 
= (\{P_2\}, \emptyset)$. 
Upon his/her accepting it, 
$P_3$, a part of $\{P_1\}^\diamond$ hitherto unknown 
to him/her, comes into his/her consciousness. But in no time, 
some external source convinces him/her that 
the decision on 
what beliefs remain and what beliefs get removed 
after a minimal belief contraction is as unpredictable as 
throwing a die. So he/she 
drops $P_2$. In no time, his/her apprehension grows,  
and he/she becomes dismissive of  
logically representing belief changes. 
The proposition $P_1$ must go. But 
so must $P_3$, since it has existed
on the presumption that 
it be possible to characterise belief changes in logic, 
that is, in the AGM theory as he/she perceives it. 
\subsection{Outlines; on the side theme 
    concerning the recovery property; and other remarks}
To bring the extra expressiveness
that differentiates the two cases 
into 
the latent belief theory, 
in Section 2 we enrich the theory 
by defining belief dependencies 
among attributive beliefs. 
The basic idea is to extend 
the definition of an attributive 
belief to include the fourth parameter: 
$P(P_1, P_2, n)$ so that 
$n$ can determine what beliefs $P_2$ will 
depend on  
once it becomes visible. 
As $P_2$ is made to exist by $P$ and $P_1$, 
the number of the possibilities is four, 
and 
it suffices if $n$ ranges over $\{0,1,2,3\}$. 
Now, once $P_2$ becomes visible 
from $P(P_1, P_2, n)$,  we 
can say that 
if $n = 0$, then $P_2$ is an autonomous 
belief, independent of
$P_1$ and of $P_2$; 
if $n=1$, then it is a dependent belief, independent of
 $P_2$ but dependent on $P_1$; 
 if $n=2$, then it is a dependent belief, independent of 
 $P_1$ but dependent on $P_2$; 
 and if $n=3$,  
 then it exists on the existences of 
 both $P_1$ and $P_2$. We store the information 
 of which belief is dependent 
 on what beliefs in a table  
 comprising pairs of 
 a proposition $P$ and a set of propositions $\Gamma$
on which $P$ is dependent. For instance, 
it may contain $(P, \{P_1\})$. 
Suppose that a belief set contains 
$P, P_1$ among all the other beliefs, and 
that it has this table. 
Then, if the belief set is contracted
such that $P_1$ 
no longer remains in the resulting 
belief set, then it must also happen 
that 
the set do not contain $P$ which - 
so does the table say - cannot exist 
unless $P_1$ is visible. 
Whenever a set of new propositions 
are added to or removed from a belief set, 
the change is reflected upon the table whose 
contents are updated appropriately through a set of update postulates.\\ 
\indent In Section 3, we present
 all the belief change postulates, to complete the development 
 of the enriched latent belief theory. 
 We then show that there is 
 no recovery property in our theory. 
 As a brief reminder, 
 the (AGM) recovery postulate says: 
 assume a set of propositions $X$; 
 assume that it is consistent and that 
 it is closed under logical consequences; 
 remove a belief $P$ from the set minimally, 
 but make sure that the resultant set is
 again some set of propositions $X'$ 
 which is closed under the logical consequences; 
 now add $P$ again and close the set under 
 the logical consequences; and we should 
 get a logically closed set of propositions that contains 
 all the propositions that were in $X$. 
 This postulate has since met objections, 
 many researchers observing that it could 
 bear counter-intuitive results. 
 However, the attempts that have been made 
 so far to amend it either recovered the postulate 
 in full, or else had to introduce fresh 
 controversies of their own \cite{Rott00}. 
 Our theory offers the sought-after solution to the recovery 
 paradox. Section 4 concludes. 
\section{Formalisation of 
    dependencies among  
    attributive beliefs and 
    of belief sets} 
Readers may benefit from 
reading the first few sections of \cite{Arisaka15perception}. 
Although this section is 
technically self-contained, 
a detailed intuition is not given to each 
definition due to space limitation, which is, however, found in the reference. 
For the intuition of the key new notations as well as 
examples that illustrate why they 
are introduced, readers may find it useful 
to refer back to 
Section 1. \\
\indent Let us assume a set of possibly uncountably 
many atomic propositions. We denote 
the set by $\mathcal{P}$, and refer
to each element by $p$ with or without 
a subscript. More general propositions are constructed
from $\mathcal{P}$ and the logical
 connectives of propositional classical logic: 
 $\{\top_0, \bot_0, \neg_1, \wedge_2, \vee_2\}$. 
 The subscripts denote the arity. 
 Although the classical implication $\supset_2$ 
 is not used explicitly, it is derivable 
 from $\neg$ and $\vee$ in the usual manner: 
 $p_1 \supset p_2 \equiv \neg p_1 \vee p_2$. 
The set of literals, i.e. 
any $p$ or $\neg p$ for $p \in \mathcal{P}$, 
is denoted 
by $\lit$. The set of all the propositions 
is denoted 
by $\props$; and each element 
of $\props$ is referred to
by $P$ with or without a subscript. 
Let us assume that, given any $O \subseteq 
2^{\props}$, $\logcon(O)$ is the set of 
all the propositions that are the logical consequences of 
any (pairs of) elements in $O$. A set of 
propositions $U$ is said to be consistent 
iff for any $P \in \props$, if $P \in U$, 
then $\neg P \not\in U$; and if $\neg P \in U$, 
then $P \not\in U$. Let us assume 
that, for any tuples of some sets $(U_1, \ldots, U_k)$, 
we have $\pi_i((U_1, \ldots, U_k)) = U_i$, for $1 \le i \le k$. 
Let us further assume that 
the union of two tuples of sets: $(U_1, \ldots, U_k) 
\cup (U_1', \ldots, U_k')$ is $(U_1 \cup U_1', 
\ldots, 
U_k \cup U_k')$. 
\hide{ 
We define a table 
of propositions and their support sets 
as a family of the ordered set:  
$\widetilde{\props} \times 
2^{\widetilde{\props}}$. 
We denote each such table by $\support$ with or 
without a subscript. Each one of them 
is defined to 
satisfy the following conditions. 
First, for each $P \in \props$, 
it holds that there exists 
some support $\Gamma$ such that 
$\forall P \in \props\ 
\exists \Gamma.(P, \Gamma) \in \support$; 
second, if $(P, \Gamma_1), 
(P, \Gamma_2) \in \support$, then 
$\Gamma_1 = \Gamma_2$.  \\
\indent We define a function $\update$ that takes 
a table, a proposition, and an element of  
$\mathfrak{N}$ and returns a table: 
$\update(\support, P, \Gamma) = (\support \backslash
\support(P)) \cup (P, \pi_2(\support) \cup \Gamma)$.  
}
\begin{definition}[Associations and attributive 
    beliefs\cite{Arisaka15perception}] 
  An association tuple 
  is a tuple $(\mathcal{I}, X, \assoc)$.  
  Let $\mathfrak{N}$ be 
  $\{0, 1, 2, 3\}$. Then 
  $\mathcal{I}$ is a mapping 
  from $\lit$ to $2^{\props \times \props \times 
      \mathfrak{N}}$.
  $X$ is an element of $2^{\props}$. 
  And $\assoc$ is a mapping from 
  $\props$ to $2^{\props \times \props \times
      \mathfrak{N}}$.
  Let $\exclude$ be a mapping from 
  $\props$ to $2^{\props}$ 
  such that $\exclude(P) = 
  \logcon(\{P\}) \cup \{P_1 \in \props \ | \ 
      P \in \logcon(\{P_1\})\}$. 
  Then $\mathcal{I}$ is defined to satisfy
  that, for any $P \in \lit$, 
  if either $P_1 \in \exclude(P)$ 
  or $P_2 \in \exclude(P)$, then 
  $(P_1, P_2) \not\in \mathcal{I}(P)$. 
  $\assoc$ is defined to satisfy 
  (1) that if $P$ is a tautology,
  then $\assoc(P) = (\emptyset, \emptyset, 0)$; 
  (2) that if $\neg P$ is tautology, then $\assoc(P) 
  = (\props, \props, 0)$; and (3) that, if neither; 
  {\small 
      \begin{itemize}[leftmargin=0.4cm]
          \item $\assoc(P) = \mathcal{I}$
              if $P \in \lit$. 
          \item $\assoc(P_1 \wedge P_2) = 
              (\assoc(P_1) \cup \assoc(P_2)) 
              \downarrow \exclude(P_1 \wedge P_2)$ 
              where $(U_1, U_2, n) \downarrow U_3 
              = (U_1', U_2', n)$ such that 
              $U_{1,2}' = U_{1,2} \backslash U_3$. 
          \item $\assoc(P_1 \vee P_2) 
              = \assoc(P_1 \wedge P_2)$ 
              if $P_1, P_2 \in X$. 
          \item $\assoc(P_1 \vee P_2) 
              = \assoc(P_i)$ if 
              $\neg P_j \in X$ for 
              $i,j \in \{1,2\}, i \not= j$.  
          \item 
$\assoc(P_1 \vee P_2) 
        = \assoc(P_i) \downarrow \exclude(P_1 
        \wedge P_2)$ if $P_i \in X$ and 
        $P_j, \neg P_j \not\in X$ for $i,j 
        \in \{1,2\}, i \not= j$. 
          \item $\assoc(P_1 \vee P_2)$ 
              consists of all the pairs 
              $(P_x, P_y, n)$ 
              satisfying the following, otherwise: 
              there exists $(P_a, P_A, n_1)$ in 
              $\assoc(P_1)$ 
              and there exists $(P_b, P_B, n_2)$ 
              in $\assoc(P_2)$ such that 
              (1) $P_x = P_a$; (2) 
              $\logcon(P_a) = \logcon(P_b)$;  
              (3) $n_1 = n_2$; 
              (4) either $P_B \in \logcon(P_A)$ 
              or $P_A \in \logcon(P_B)$;
              (5) if $P_B \in \logcon(P_A)$, 
              then $P_y  = P_B$, else if 
              $P_A \in \logcon(P_B)$, 
              then $P_y = P_A$; and (6) 
              $P_x, P_y \not\in \exclude(P_1 \wedge P_2)$.
          \item $\assoc(\neg (P_1 \wedge P_2)) 
              = \assoc(\neg P_1 \vee \neg P_2)$. 
          \item $\assoc(\neg (P_1 \vee P_2)) 
              = \assoc(\neg P_1 \wedge \neg P_2)$.
 \end{itemize} 
  } 
  We call each $P(P_1, P_2, n)$ for 
  some  beliefs $P, P_1$ and $P_2$ and 
  some $n$ 
  a belief quadruple, 
  and denote the set of belief quadruples
  by $\bquartet$. We define the set 
  $\{P(P_1, P_2, n) \in \bquartet \ | \ 
      [(P_1, P_2, n) \in \assoc(P)] 
      \andMeta [(P_1, P_2) \not= 
      (\props, \props)]\}$\footnote{
      In lengthy formal expressions, 
      we use meta-connectives
      $\andMeta, \orMeta, \rightarrow^{\dagger}, 
      \forall, \exists$ in place 
      for conjunction, disjunction, 
      material implication, universal quantification
      and existential quantification, each 
      following the semantics in classical logic.} 
  to be the set of beliefs attributive 
  to $P$. We denote the set by $\cond(P)$. 
  We denote $\bigcup_{P \in \props'} \cond(P)$ 
  by $\cond(\props')$ where $\props' \subseteq 
  \props$. If $\props' = \props$, 
  we denote it simply by 
  $\cond$. 
\end{definition}   
The third condition of the sixth item
for disjunction, 
which says that 
$(P_a, P_A, n_1)$ in $\assoc(P_1)$ 
and $(P_b, P_B, n_2)$ in $\assoc(P_2)$ 
are not comparable 
unless  
both of the $P_A$ and $P_B$  
have the same attributive belief dependency (i.e. 
$n_1 = n_2$), 
could be possibly relaxed to be less conservative. 
We will leave the consideration to a future work.  
\subsection{Belief sets and axioms, and update postulates} 
In our enriched latent belief theory, 
a belief base is defined to contain 
a subset of $\props$ and  
a set of quadruples: $P_a(P_b, P_c, n)$ 
where 
$P_a, P_b, P_c \in \props$ and 
$n \in \mathfrak{N}$. Additionally, 
it is defined to contain  
a table consisting of 
pairs of 
$(P, \Gamma) \in 
\props \times 2^{\props}$, 
which records 
which belief is dependent on what 
beliefs. Let us denote a set of 
the pairs by $\Pi$ with or without a subscript. 
Let us call 
some tuple $(\Gamma, \Delta, \Pi)$ 
for some $\Gamma \in 2^{\props} \backslash 
\emptyset$, some $\Delta \in 2^{\bquartet}$ 
and some $\Pi$ a belief base. 
We denote the set of belief bases 
by $\bbase$, and 
refer to each element by $B$ with or without 
a subscript.  A belief set is defined to be an element 
of 
$\bbase$ that satisfies the following 
axioms. 
\begin{enumerate}[leftmargin=0.4cm]
    \item $\logcon(\pi_1(B)) = \pi_1(B)$ 
         (Logical closure).   
    \item If $P \in \pi_1(B)$, 
        then there is a finite subset $X$ of 
        $\pi_1(B)$ such that $P \in \mathsf{L}(X)$. 
        (Compactness). 
    \item If $P \not\in \pi_1(B)$, 
        then for any $P_1, P_2 \in \props$ and any 
        $n \in \mathfrak{N}$ it holds that 
        $P(P_1, P_2, n) \not\in \pi_2(B)$
        (Attributive 
        belief adequacy).  
        \hide{ \item For any finite $k$, $(P_1, \Gamma_1), 
        \ldots, (P_k, \Gamma_k) \in 
        \support$ 
        and some non-empty $\Gamma_i$ 
        for each 
        $1 \le i \le k$ iff 
        $(P_a, \Gamma_a) \in \support$ for 
        some non-empty $\Gamma_a$ 
        for each  
        $P_a \in \logcon(\{P_1, \ldots, P_k\})$ 
        (Logical closure: support table).  
    }
    \item If $P \in \pi_1(B)$, 
        then $(P, \Gamma) \in \pi_3(B)$ 
        for some  $\Gamma$ 
        (Support adequacy 1).     
    \item If $P \not\in \pi_1(B)$, 
        then $(P, \Gamma) \not\in \pi_3(B)$ 
        for any $\Gamma$ 
        (Support adequacy 2). 
    \item If $(P, \Gamma) \in \pi_3(B)$, 
        then $\Gamma \not= \emptyset$ (Support 
        sanity). 
    \item If $(P_1, \Gamma_1), (P_2, \Gamma_2), 
        (P_1 \vee P_2, \Gamma)
        \in \pi_3(B)$, 
        then $\logcon(\Gamma_1) \cup \logcon(\Gamma_2) = 
        \logcon(\Gamma)$ (Disjunctive 
        support propagation). 
    \item If $(P_1, \Gamma_1), (P_2, \Gamma_2), 
        (P_1 \wedge P_2, \Gamma) \in 
        \pi_3(B)$, then  
        $\logcon(\Gamma_1) \cap \logcon(\Gamma_2) 
        = \logcon(\Gamma)$ 
        (Conjunctive support propagation). 
    \item If $(P_1, \Gamma_1), (P_2, \Gamma_2) 
        \in \pi_3(B)$ 
        such that 
        $\logcon(P_1) \subseteq \logcon(P_2)$, 
        then $\logcon(\Gamma_2) \subseteq \logcon(\Gamma_1)$
        (Support monotonicity).  
        \hide{ 
    \item If $(P_1 \vee P_2, \Gamma) 
        \in \pi_3(B)$ and 
        if $(P_1, \Gamma_1), (\neg P_2, \Gamma_2) 
        \in \pi_3(B)$, 
        then  
        $\logcon(\Gamma) = \logcon(\Gamma_1)$ 
        (Disjunctive support identity).   
    \item If $(P_1 \wedge P_2, \Gamma), 
        (P_1, \Gamma_1), (P_2, \Gamma_2)
        \in \pi_3(B)$, 
        and if $\logcon(\Gamma_1) = \logcon(\Gamma_2)$, 
        then $\logcon(\Gamma) = \logcon(\Gamma_1)$
        (Conjunctive support identity).  
    }
    \item If $P$ is a tautology such that 
        $P \in \pi_1(B)$, then 
        if $(P, \Gamma) \in \pi_3(B)$, 
        then $\top \in \Gamma$ 
        (Tautological support). 
\end{enumerate}  
Compared to the definition of a belief set 
as found in \cite{Arisaka15perception}, 
this definition does not conduct the fixpoint 
iterations. For the axioms around the third component 
of a belief base (the items 
from 4 to 10), 
insertion of a couple of notes here may 
be useful. 
The (Support adequacy 1), the (Support adequacy 2) and the (Support sanity) ensure that if $B$ is a belief set, 
then that $P$ is in $\pi_1(B)$ 
means that $(P, \Gamma)$ for a non-empty $\Gamma 
\in 2^{\props}$ is in $\pi_3(B)$, and vice versa.  
The (Disjunctive support propagation) 
and the (Conjunctive support propagation) 
say how supporting propositions 
are determined, deterministically up to $\logcon$, 
along the $\logcon$ ladders. 
The basic functionality of the (Support monotonicity) 
is to make $\logcon(\Gamma_1) = \logcon(\Gamma_2)$ 
in case $P_1$ and $P_2$ are indistinguishable 
in $\logcon$. 
Finally the (Tautological support) effectively states that  
a tautological belief in a belief set 
is independent of any non-tautological beliefs. 
We say that a belief base $B$, a belief set in particular, 
is consistent iff  $\pi_1(B)$ is consistent. \\
\indent The following sets of 
postulates: one for when 
an item is added to it, as characterised 
by the operator $\mathsf{\circ}_B$ 
for $B \in \bbase$ which takes 
a belief base and a pair of    
a proposition and a set of propositions
to return a belief base;
and one for when 
an item is removed from it, 
as characterised by the operator $-_B$ 
for $B \in \bbase$ which takes 
a belief base and a set of propositions 
to return a belief base, 
define rules for updating  
the support table (which is the third component 
of a belief base). \\
\textbf{Support table augmentation operator} $\mathsf{\circ}$   
satisfies the following. 
\begin{enumerate}[leftmargin=0.4cm]   
    \item For each {\small $(P, \Gamma) \in 
            \support_1$},  
        if $(P, \Gamma_x) \in \pi_3(B)$, then 
        $(P, \Gamma_x \cup \Gamma \cup \bigcup_{\{ 
            (P_1, \Gamma_1) 
            \in \support_1 \ | \ 
            \logcon(P) \subseteq \logcon(P_1)\}} 
    \Gamma_1) \in \pi_3(B \circ_B \support_1)$.
    \item For each {\small $(P, \Gamma) \in 
            \support_1$},  
        if {\small $(P, \Gamma_x) \not\in \pi_3(B)$}
         for no $\Gamma_x$, then 
        $(P, \Gamma \cup \bigcup_{\{ 
            (P_1, \Gamma_1) 
            \in \support_1 \ | \ 
            \logcon(P) \subseteq \logcon(P_1)\}} 
    \Gamma_1) \in \pi_3(B \circ_B \support_1)$.
\item If $(P_x, \Gamma_x) \in \pi_3(B)$ 
    and if, for all $(P, \Gamma) \in \Pi_1$, 
    it holds that $P_x \not\in \exclude(P)$, 
    then $(P_x, \Gamma_x) \in \pi_3(B 
    \circ_B \Pi_1)$.   
\item If {\small $(P, \Gamma) \not\in \pi_3(B)$}, 
    and if {\small $(P_1, \Gamma_1) \not\in \Pi_1$} 
    for any $\Gamma_1$ and for any  
    $P_1$ such that 
    {\small $\logcon(\{P_1\}) = \logcon(\{P\}) \not= \logcon(\{\top\})$},  
    then {\small ${(P_2, \Gamma_2) \not\in \pi_3(B \circ_B 
            \Pi_1)}$} for 
    any $\Gamma_2$ and any $P_2$ 
    such that {\small $\logcon(P_2) = \logcon(P)$}. 
    \hide{ \item If $(P_1, \Gamma_1) \in \pi_3(B \circ_B \Pi_1)$, 
    then either $(P_1, \Gamma_1) \in \pi_3(B)$, 
    or else $(P_1, \Gamma) \not\in \pi_3(B)$ 
    for any $\Gamma$, or else  
    $(P_1, \Gamma) \in \pi_3(B)$ 
    such that  
    $\Gamma_1 = \Gamma \cup  $ (anyway, to recover).  
}
\item $\pi_i(B) = \pi_i(B \circ_B \Pi_1))$ 
    for $i \in \{1,2\}$. 
\item $B \circ_B (P, \Gamma)$ 
                    satisfies the earlier axioms. 
\end{enumerate}  
To explain these a little,  
the fifth postulate ensures that 
the operation $\circ_B$ acts, if any, 
only upon $\pi_3(B)$, leaving 
$\pi_{1,2}(B)$ intact. Because of this, 
whatever changes that $\circ_B$ operation 
would make to $\pi_3(B)$, the sixth postulate 
guarantees, through the (Support adequacy 1) 
and 
the (Support adequacy 2), that 
any $(P, \Gamma) \in \pi_3(B \circ_B \Pi_1)$ 
is linked to $P \in \pi_1(B \circ_B \Pi_1)$: 
it is in $\pi_3(B \circ_B \Pi_1)$ 
only if $P \in \pi_1(B \circ_B \Pi_1)$. 
Now, as to what those supporting sets of 
beliefs are for each belief, if 
$B$ satisfies the axioms 7 - 10, 
then determining the supporting set of beliefs 
for each of key beliefs
suffices to determine all the other supporting 
sets for each belief, due again 
to the sixth postulate which includes (Disjunctive 
support propagation) and (Conjunctive support propagation). 
As for what the key beliefs are, 
there are those beliefs in $B$ that are unrelated 
to the elements of $\Pi_1$ by $\logcon$. Their supporting set 
of beliefs should not change, which is ensured 
in the third postulate. Apart from those beliefs, 
it suffices to ensure the first two postulates 
to determine all the other supporting sets. 
Finally, the fourth postulate ensures that 
the change to $B$ should be minimal 
by the $\circ_B$ operation. \\
\textbf{Support table reduction operator} 
$\mathsf{-}$ satisfies the following. Here {\small $X \backslash\backslash Y$} denotes
{\small $\{P \in X \ | \ [\logcon(\{P\}) = \logcon(\{\top\})] 
    \orMeta [\neg^{\dagger} 
    \exists P_1 \in Y.\logcon(\{P_1\}) = \logcon(\{P\})]\}$}.  
\begin{enumerate}[leftmargin=0.4cm]  
    \item $[(P_1, \Gamma_1 \backslash \backslash 
          \Gamma) \in 
          \pi_3(B -_B \Gamma)]$ 
          if $[(P_1, \Gamma_1) \in \pi_3(B)]$.     
      \item If $(P_1, \Gamma_1) \in 
          \pi_3(B -_B \Gamma)$, 
          then $(P_1, \Gamma_x) \in 
          \pi_3(B)$ 
          for $\Gamma_x = \Gamma_1 \cup 
          \Gamma$ or else 
          $\Gamma_x = \Gamma_1$. 
     \item $\pi_i(B) = \pi_i(B -_B \Gamma)$ for 
         $i \in \{1,2\}$. 
     \item $B -_B \Gamma$ 
             satisfies the earlier axioms except 
             for (Support sanity). 
\end{enumerate}   
The reduction operation is simpler, 
trying to remove 
matching elements off each $(P, \Gamma) 
\in \pi_3(B)$. 
Then the earlier axioms, in particular 
the (Support adequacy 1) and
the (Support adequacy 2) 
link the elements to the first component 
of $B$. The reason 
that we do not include the (Support sanity) 
in the fourth postulate is just so that 
we can get the belief contraction 
operation right in the next section. 
\section{Belief change postulates} 
Let us define that $\maltese$ is 
the belief expansion operator in our belief 
theory, $\fallingdotseq$ the belief 
contraction operator, 
and $\bigstar$ the belief revision operator. 
We require each one of them 
to be a fixpoint iterating process. 
Since the belief revision operation 
has been always a derivable operation 
in the AGM tradition, we only 
define $\maltese$ and $\fallingdotseq$; 
and, later on, 
will show how the two operations 
are combined into $\bigstar$. 
The graphical representations  
of $B_{\text{init}} \maltese \{P\}^\diamond$ and 
$B_{\text{init}} \fallingdotseq \{P\}^\diamond$, 
assuming that $B_{\text{init}}$ is a belief set, 
are found in Figure \ref{representations}.\footnote{
Although the inner components have not yet been 
formally defined, we believe that  
the display of the visual representations, 
and then detailing the concepts 
used there finely will better explain these operations.} 
The $\visible_B(\{P\}^\diamond)$ comes 
from \cite{Arisaka15perception}. It is 
{\small $P \cup \{P_2 \ | \ [P(P_1, P_2, n) \in 
    \pi_2(\{P\}^\diamond)] \andMeta 
    [P_1 \in \pi_1(B)]\}$}. 
In both of the diagrams,  $X := Y$ 
denotes the assignment of $Y$ to 
$X$. $\Gamma := X; B_0 := Y$ means that 
the assignment operations are taken 
in sequence: the assignment to $\Gamma$, followed
by that to $B_0$. 
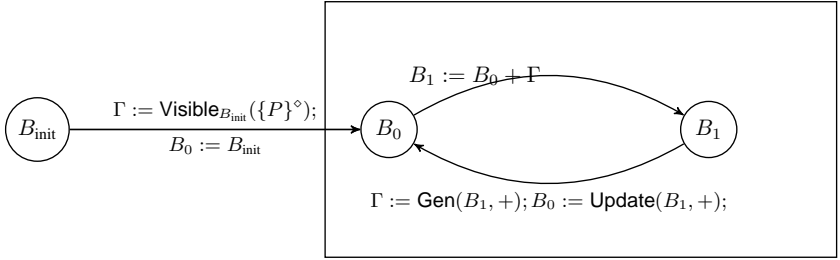
\begin{figure*}[!t]   
    \scalebox{0.85}{ 
    \begin{tikzpicture}[>=stealth',semithick,auto,node distance=5cm] 
                \node[state] (A)   {$B_0$};   
\node[state] (Z) [node distance=5.5cm,left of=A] {$B_{\text{init}}$};
        \node[state] (b) [right of=A] {$B_1$}; 
        \path (A) edge [->,bend left] node [left] {$
            B_1 := B_0 \div 
            \Gamma$} (b)  
        (b) edge [bend left,->] node 
        {{\small $\Gamma := \generate(B_1, \div); 
            B_0 := \update(B_1, \div) 
            $}} (A);    
\draw (-1, 2) rectangle +(8,-4);  
\draw (1,3) node {\huge $B_{\text{init}} \fallingdotseq
    \{P\}^\diamond$ \normalsize 
    {\ }\ 
    (Belief contraction)};  
\path (Z) edge [->] node [above] {{\small $\Gamma :=  
        \visible_{B_{\text{init}}}(\{P\}^\diamond);$}}
(A)  
(Z) edge [->] node [below] {{\small $B_0 := 
        B_{\text{init}}$}} (A); 
    \end{tikzpicture}  
}
    {\ }\newline\newline 
    \scalebox{0.85}{
    \begin{tikzpicture}[>=stealth',semithick,auto,node distance=5cm] 
                \node[state] (A) []   {$B_0$};  
                \node[state] (Z) 
                [node distance=5.5cm,left of=A] {$B_{\text{init}}$}; 
        \node[state] (b) [right of=A] {$B_1$};  
        \path (A) edge [->,bend left] node [left] {$B_1 := B_0 + 
            \Gamma$} (b)  
        (b) edge [bend left,->] node 
        {{\small $\Gamma := \generate(B_1, +); 
            B_0 :=  \update(B_1, +); 
            $}} (A);    
     \draw (-1, 2) rectangle +(8,-4);  
     \draw (1,3) node {\huge $B_{\text{init}} \maltese
         \{P\}^\diamond$ \normalsize 
    {\ }\ 
    (Belief expansion)};  
\path (Z) edge [->] node [above] {{\small $\Gamma :=  
        \visible_{B_{\text{init}}}(\{P\}^\diamond);$}}
(A)  
(Z) edge [->] node [below] {{\small $B_0 := 
        B_{\text{init}}$}} (A); 
        \end{tikzpicture}   
    }
        \caption{Graphical representations 
        of the belief expansion  
        $B_{\text{init}} \maltese \{P\}^\diamond$ 
        and the belief contraction 
        $B_{\text{init}} \fallingdotseq \{P\}^\diamond$ 
        in our latent belief theory.  
        $B_{\text{init}}$ is assumed 
        to be a consistent belief set, although
        the restriction is more a pragmatic 
        than a technical one.} 
    \label{representations} 
\end{figure*} 
Both of the processes continue until the fixpoint
is reached.\\
\indent We now define 
all the participants in the two representations. 
Before moving further, we recall 
(\cite{Arisaka15perception}) 
the postulate for the association tuple. \\
\textbf{Association tuple} has one postulate: 
\begin{enumerate}[leftmargin=0.4cm] 
    \item Each $B$ has 
        an association tuple  
        $(\mathcal{I}, \pi_1(B), \assoc)$ 
        for some $\mathcal{I}$ and 
        some $\assoc$. 
\end{enumerate} 
\textbf{Internal expansion operator} 
$\mathbf{+}$ 
has two postulates: 
\begin{enumerate}[leftmargin=0.4cm] 
    \item $B + \Gamma = (\logcon(\pi_1(B) \cup \Gamma), 
        \cond(\pi_1(B + \Gamma)), \pi_3(B))$.  
    \item If the association tuple 
        for $B$ is 
        $(\mathcal{I}, \pi_1(B), \assoc)$, 
        then that for $B + \Gamma$ 
        is $(\mathcal{I}, \pi_1(B + \Gamma), 
        \assoc)$. 
\end{enumerate} 
\textbf{Internal contraction operator}
$\mathbf{\div}$ 
has the following postulates:  
\begin{enumerate}[leftmargin=0.4cm] 
    \item $B \div \Gamma 
        = (\logcon(B \div \Gamma), \cond(\pi_1(B 
        \div \Gamma)), 
        \pi_3(B))$. 
    \item $\forall P_1 \in \Gamma. 
        P_1 \not\in \logcon(\top) 
        \rightarrow^{\dagger}
        \pi_1(B \div \Gamma)$. 
    \item $\pi_1(B \div \Gamma) \subseteq \pi_1(B)$. 
    \item $(\forall P_1 \in \Gamma. 
        P_1 \not\in \pi_1(B) 
        \orMeta P_1 \in \logcon(\top))
        \rightarrow^{\dagger} B \div \Gamma 
        = B$. 
    \item $[\logcon(\Gamma_1) = 
        \logcon(\Gamma_2)] \rightarrow^{\dagger} 
        [B \div \Gamma_1 = 
        B \div \Gamma_2]$. 
    \item $B \subseteq  
        (B \div \Gamma) + 
        \Gamma$. 
    \item If the association tuple 
        for $B$ is 
        $(\mathcal{I}, \pi_1(B), \assoc)$, 
        then that for $B \div \Gamma$ 
        is $(\mathcal{I}, \pi_1(B \div \Gamma), 
        \assoc)$ (Association update). 
\end{enumerate}  
These postulates closely coincide with 
the AGM postulates \cite{Makinson85}. 
In passing, 
two more postulates may be added 
to the list above: 
{\small $\forall P_1 \wedge P_2 
\in \Gamma.[P_1 \not\in Cn(B) \div \Gamma] 
\rightarrow^{\dagger} [Cn(B) \div \Gamma 
\subseteq Cn(B) \div \Gamma(P_1 \wedge P_2 
\mapsto P_1)]$}; and  
{\small $\forall P_1 \wedge P_2 
    \in \Gamma.(Cn(B) \div 
    \Gamma(P_1 \wedge P_2 \mapsto P_1)) 
    \cap (Cn(B) \div \Gamma(P_1 \wedge P_2 \mapsto 
    P_2)) \subseteq 
    Cn(B) \div \Gamma$}.  
    $\Gamma(P_1 \wedge P_2 \mapsto P_x)$ 
    means to replace all the occurrences of
    $P_1 \wedge P_2 \in \Gamma$ 
    with $P_x$. 
    The two postulates are intended to regulate belief retention \cite{Makinson85}. 
    We are hardly concerned with these 
    supplementary postulates 
    in this particular work, but a mentioning 
    of them 
    may be useful to a reader who 
    is interested 
    in retention of the beliefs based on 
    the concept of
    the epistemic entrenchment. Somehow 
    related to it, recall (\cite{Makinson85}) 
    that, generally speaking 
    neither $B \div \Gamma$ nor $B + \Gamma$ 
    is a deterministic operation. \\
    \textbf{{\generate} and {\update}} are 
    defined as follows.\\
    \indent {\small $\generate(B, 
    \div) :=  
    \{P \ | \ [(P, \Gamma) \in \pi_3(B)] 
        \andMeta  
        ([\forall P_1 \in \Gamma.P_1 \not\in 
        \pi_1(B)] 
        \orMeta [\Gamma = \emptyset])\}$}. {\it Explanation}:  
       $\div$ may remove beliefs off 
       the first component of a belief base. 
       Suppose some beliefs are indeed dropped off, 
       and that we have $B_1 \subset B_0$ 
        (see 
       the graphical representation). 
       Now, it could be that some proposition 
       $P_x \in B_1$ may have 
       lost all the propositions for 
       supporting its existence. 
       Then $P_x$ can no longer subsist in 
       $B_1$, which will 
       be further contracted by all such $P_x$ in the 
       next round of the fixpoint iteration. 
       \\
\indent {\small $\generate(B, +) := 
    \{P_2 \ | \ [P(P_1, P_2, n) \in \pi_2(B)] 
        \andMeta [P, P_1 \in \pi_1(B)]\}$}. 
{\it Explanation}: When a belief base is augmented 
   with a new set of beliefs, 
   it could happen that latent beliefs 
   become visible, which is 
   a subset of all the propositions 
   generated by this set construction. \\
\indent {\small $\update(B, \div) := 
     B -_B  \{P \ | \ [(P, \Gamma) 
         \in \pi_3(B)] \andMeta 
         [P \not\in \pi_1(B)]\}$}. 
     {\it Explanation}:  
       The support table is updated 
      to reflect the loss of beliefs 
      by $\div$. Specifically, any 
      element in $\pi_3(B)$ is removed 
      if the first component of the element 
      is no longer 
      in $\pi_1(B)$. 
      However, recall that the $-$ 
      operation does not 
      satisfy the (Support sanity) axiom. 
      Hence even if the operation 
      should generate some $(P, \Gamma)$ 
      such that $\Gamma = \emptyset$, 
      it is not removed from the third component.\\
\indent {\small $\update(B,+) :=  
    B \circ_B 
    \{(P_2, \Gamma) \ | \ 
        [P(P_1, P_2, n) \in \pi_2(B)] \andMeta  
        [\Gamma = \rho(P(P_1, P_2, n)]\}$} 
where $\rho(P(P_1, P_2, n))$ is $\{\top\}$ if $n = 0$; 
is $\{P\}$ if $n = 1$; 
is $\{P_1\}$ if $n = 2$; 
and is $\{P, P_1\}$ if $n = 3$. \\
There is certain difficulty in 
having an intuitively appealing representation theorem 
of 
$\fallingdotseq$ non-iteratively. 
In the AGM belief contraction operation, 
the belief set as a set of propositions, say $X$, is contracted by some proposition $P$. 
No matter how many 
candidates are for the result of the contraction 
operation,
the candidates are determined 
by $X$ and $P$ alone with no other non-deterministic 
factors. However, in our 
characterisation, the $(k+1)$-th fixpoint iteration  
depends upon the result of the $k$-th 
internal contraction by $\div$, which 
can be known only non-deterministically. 
This makes it hard 
to generate a set-based representation 
of the contraction operation such that  
it retain the same intuitive appeal as the AGM 
representation theorem does. 
For this reason, we regard   
the transition systems shown earlier  
as the representation theorem equivalents for $\fallingdotseq$ and $\maltese$. 
By contrast, the internal operations by $+$ and $\div$ 
almost exactly emulate the AGM expansion and 
contraction operators (Cf. \cite{Makinson85}),  
and the set-based representation is feasible 
for each of them without costing intuitive appeal. 
Particularly 
for $\div$, it goes as follows \cite{Makinson85}. 
Suppose that $B$ is a belief set. Then, 
$\pi_1(B) \div \Gamma = 
\bigcap (\gamma(\Xi(\pi_1(B), \Gamma)))$. 
$\Xi$ is a mapping from belief sets 
and propositions into belief sets. 
For any belief set $B$ and any $\Gamma$,  
we say that a belief set $B_1$ satisfying 
$\pi_1(B_1) \subseteq \pi_1(B)$ 
is a maximal subset of $\pi_1(B)$ for $\Gamma$ 
iff 
\begin{enumerate}[leftmargin=0.4cm]
    \item For any $P_1 \in \Gamma$, 
        $P_1 \not\in \pi_1(B_1)$ if 
        $P_1$ is not a tautology. 
    \item For any belief set $B_2$, 
        if $\pi_1(B_1) \subset 
        \pi_1(B_2) \subseteq \pi_1(B)$, 
        then there exists some 
        $P_a \in \Gamma$ such that 
        $P_a \in \pi_1(B_2)$. 
\end{enumerate} 
We define $\Xi(\pi_1(B), \Gamma)$ 
to be the set of all the subsets 
of $\pi_1(B)$ maximal for $\Gamma$. We further 
define a function $\gamma$, so that, 
if $\Xi(\pi_1(B),\Gamma)$ is not empty, 
then $\gamma(\Xi(\pi_1(B), \Gamma))$ 
is a subset of $\Xi(\pi_1(B), \Gamma)$; 
or if it is empty, it is simply $\pi_1(B)$. 
Then we have that $\pi_1(B) \div \Gamma 
= \bigcap(\gamma(\Xi(\pi_1(B), \Gamma)))$. \\
\indent From $\maltese$ and 
$\fallingdotseq$, 
we define the 
belief revision operator:\\
{\small ${B\ \bigstar \ 
        \{P\}^\diamond}
= {(B \ \fallingdotseq \ 
    (\visible_B^-(\{P\}^\diamond)), \emptyset)}  
\ \maltese \\ 
{\ }\qquad\qquad\qquad(\visible_B(\{P\}^\diamond), \cond(\visible_B(\{P\}^\diamond))$}. \\  
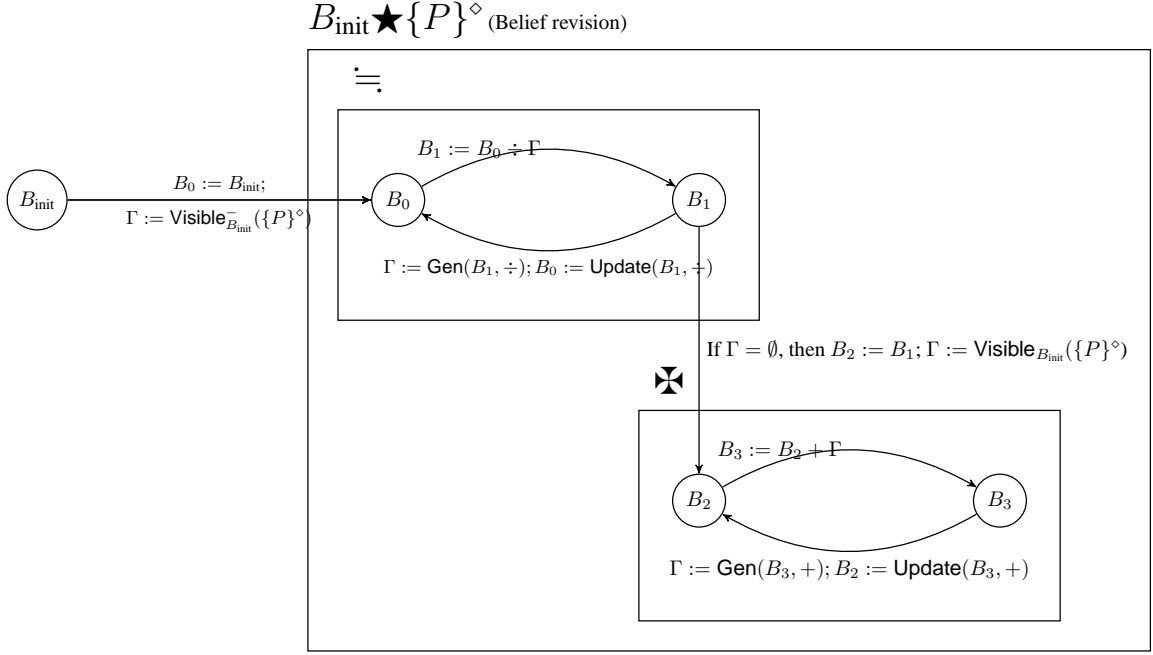
\begin{figure*}[!t]    
\label{revisionOP}     
    \scalebox{0.8}{ 
    \begin{tikzpicture}[>=stealth',semithick,auto,node distance=5cm,scale=0.5]  
        \node[state] (Z) [node distance=6cm,left of=A]  
         {$B_{\text{init}}$};
        \node[state] (A) {$B_0$};  
        \node[state] (b) [right of=A] {$B_1$}; 
        \node[state] (c) [node distance=5cm,below of=b] {$B_2$}; 
        \node[state] (d) [right of=c] {$B_3$}; 
        \path (Z) edge [->] node[above]  
{\small  $B_0 := 
            B_{\text{init}}; $} (A)  
        (Z) edge [->] node[below]
        {\small $\Gamma := 
            \visible_{B_{\text{init}}}^-(\{P\}^\diamond)$} (A) 
        (A) edge [bend left,->] node [left] {$B_1 := B_0 \div 
            \Gamma$} (b)  
        (b) edge [bend left,->] node 
        {{\small $\Gamma := \generate(B_1, \div); 
            B_0 := \update(B_1, \div) 
            $}} (A)
    (b) edge [->] node {If $\Gamma = \emptyset$, then  
        $B_2 := B_1$; $\Gamma := \visible_{B_{\text{init}}}(\{P\}^\diamond$)} (c) 
    (c) edge [->,bend left] node 
    [left] {$B_3 := B_2 + \Gamma$} (d)  
    (d) edge [->,bend left] node 
    {$\Gamma := \generate(B_3, +); B_2 := \update(B_3, 
        +)$} (c);      
    \draw (-2, 3) rectangle +(14, -7); 
    \draw (8, -7) rectangle +(14, -7);
    \draw (-1, 4) node {\huge $\fallingdotseq$};  
    \draw (9,-6) node {\huge $\maltese$};   
    \draw (2.3, 6) node {{\huge $B_{\text{init}}
            \bigstar \{P\}^\diamond$} (Belief revision)}; 
    \draw (-3, 5) rectangle +(28, -20);
    \end{tikzpicture}   
}
            \caption{The belief revision 
        operation $B_{\text{init}}\ \bigstar 
        \ \{P\}^\diamond$  as a composition 
        of the belief contraction and expansion.} 
\end{figure*} 
\indent And we have the representation in the transition system, 
as shown in Figure 2. For the same reason that has 
hindered
us from having a set-based representation 
of $\fallingdotseq$ without losing 
an appeal to intuition, it is difficult 
to come up with non-iterative postulates 
for $\bigstar$. This is not the sign 
that our theory is not robust. It just confirms
the point that every belief 
change operation is an iterative process in our 
theory. 
\begin{theorem}[Preservation] 
   Let $B$ be a consistent belief set.  
   Then $B\ \maltese \ 
   \{P\}^\diamond$,
    $B\ \fallingdotseq \ 
    \{P\}^\diamond$ and 
    $B\ \bigstar \ 
    \{P\}^\diamond$ 
    are again a belief set. 
\end{theorem}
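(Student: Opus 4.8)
The plan is to handle the three operators in turn, reducing $\bigstar$ to the other two: $B\ \bigstar\ \{P\}^\diamond$ is by definition a belief contraction followed by a belief expansion, so it suffices to prove (i) that $\fallingdotseq$ sends a consistent belief set to a consistent belief set, and (ii) that $\maltese$ sends a (consistent) belief set to a belief set; then (i) feeds the contracted intermediate result into $\maltese$, and (ii) finishes the argument. Claims (i) and (ii) concern the fixpoint of the respective transition system of Figure~\ref{representations}, so in each case the strategy is the same: fix an invariant on the loop state $(B_0,\Gamma)$, check that it is restored after one turn of the loop, and verify that at a fixpoint the invariant forces ``$B_0$ is a belief set''.

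First I would treat $\maltese$. Writing $B_0^{(0)}=B$, $\Gamma^{(0)}=\visible_B(\{P\}^\diamond)$, $B_1^{(k)}=B_0^{(k)}+\Gamma^{(k)}$, $\Gamma^{(k+1)}=\generate(B_1^{(k)},+)$ and $B_0^{(k+1)}=\update(B_1^{(k)},+)$, I would carry the invariant ``$B_0^{(k)}$ is a belief set'' by induction on $k$. For the inductive step, inspect $B_1^{(k)}=B_0^{(k)}+\Gamma^{(k)}$: by the first postulate for $+$ its first component is $\logcon(\pi_1(B_0^{(k)})\cup\Gamma^{(k)})$ and its second is $\cond$ of that, so (Logical closure), (Compactness) and (Attributive belief adequacy) hold outright; because $\pi_3$ is untouched and $\pi_1$ only grows, (Support adequacy 2), (Support sanity), both propagation axioms, (Support monotonicity) and (Tautological support) are inherited from $B_0^{(k)}$; only (Support adequacy 1) can fail, precisely because $\pi_1$ grew while $\pi_3$ did not follow. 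Now $B_0^{(k+1)}=\update(B_1^{(k)},+)$ is by definition of the form $B_1^{(k)}\circ_B\Pi_1$, so the postulate guaranteeing that $B_1^{(k)}\circ_B\Pi_1$ again satisfies the earlier axioms repairs (Support adequacy 1) and preserves the rest; hence $B_0^{(k+1)}$ is a belief set. Since at a fixpoint $B_0^{(k+1)}=B_0^{(k)}$, the value returned by $\maltese$ is this belief set.

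Next I would treat $\fallingdotseq$, with the same indexing but $+$ replaced by $\div$. Dually, $B_1^{(k)}=B_0^{(k)}\div\Gamma^{(k)}$ has a logically closed but \emph{shrinking} first component with $\pi_3$ untouched, so now (Support adequacy 1) survives while (Support adequacy 2) can fail, every other axiom going through as before; and $B_0^{(k+1)}=\update(B_1^{(k)},\div)=B_1^{(k)}-_B\{P\mid(P,\Gamma)\in\pi_3(B_1^{(k)}),\,P\notin\pi_1(B_1^{(k)})\}$, so by the fourth postulate governing $-_B$ it satisfies every belief-set axiom \emph{except} possibly (Support sanity) --- cleaning stale supporters out of each recorded support set restores (Support adequacy 2) but may leave some $(P,\emptyset)$ entry. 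Thus the invariant I would carry here is only ``$B_0^{(k)}$ satisfies every belief-set axiom except possibly (Support sanity)'' (true at $k=0$ a fortiori). The point is that (Support sanity) is exactly what a fixpoint restores: using the postulates for $\div$ (in particular $B\div\Gamma=B$ when $\Gamma$ meets no non-tautology of $\pi_1(B)$) together with (Support adequacy 2) and (Tautological support), one checks that a fixpoint forces $\Gamma^{(k)}=\emptyset$, whence $\generate(B_0^{(k)},\div)=\emptyset$, which by the definition of $\generate(\cdot,\div)$ says there is no $(P,\emptyset)$ in $\pi_3(B_0^{(k)})$ --- i.e.\ (Support sanity) holds --- and the returned $B_0^{(k)}$ is a belief set. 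Consistency comes for free, since $\pi_1$ is non-increasing along the loop (the third postulate for $\div$ together with the third postulate for $-_B$), so $\pi_1$ of the result is a subset of the consistent $\pi_1(B)$; this is what lets the composition $B\ \bigstar\ \{P\}^\diamond$, which first contracts by $\visible_B^-(\{P\}^\diamond)$ and then expands by $(\visible_B(\{P\}^\diamond),\cond(\visible_B(\{P\}^\diamond)))$, hand a consistent belief set to $\maltese$, which then returns a belief set by the first part.

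The main obstacle I anticipate is not any individual axiom check --- each is a one-line inspection once the postulates for $+$, $\div$, $\circ_B$, $-_B$, $\generate$ and $\update$ are lined up --- but the support-table bookkeeping across the iteration: pinning down which single axiom the internal step or the table operator temporarily forfeits ((Support adequacy 1) under $+$, immediately returned by $\circ_B$; (Support sanity) under $\div$, returned only by the fixpoint condition), and showing that the fixpoint condition translates exactly into the forfeited axiom. A secondary point that would need a word is the existence of the fixpoint itself: following the framework I would treat each loop as reaching a fixpoint, noting that along $\fallingdotseq$ the $\pi_1$-components descend and along $\maltese$ they ascend and become stationary once every triggerable latent belief has been surfaced, so that the limit state is a genuine fixpoint to which the invariant transfers.
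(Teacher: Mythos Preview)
The paper states Theorem~1 (Preservation) without proof: immediately after the theorem environment the text continues with ``Let us also note the following important result'' and moves on to Theorem~2. There is therefore no paper proof to compare your proposal against.

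That said, your plan is exactly the kind of argument the framework invites, and it is structured sensibly: reduce $\bigstar$ to $\fallingdotseq$ followed by $\maltese$; for each of $\maltese$ and $\fallingdotseq$ carry a loop invariant through the transition system of Figure~\ref{representations}; identify the single belief-set axiom that the internal step ($+$ or $\div$) can break and argue that the matching table operator ($\circ_B$ or $-_B$) together with the fixpoint condition restores it. Your identification of the ``fragile'' axiom in each direction --- (Support adequacy~1) on the expansion side, (Support sanity) on the contraction side --- matches the explicit remarks in the paper (the discussion after the $\circ_B$ postulates, and the sentence ``The reason that we do not include the (Support sanity) in the fourth postulate is just so that we can get the belief contraction operation right in the next section''). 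Two places where a referee would press you: first, postulate~6 for $\circ_B$ is stated for a \emph{single} pair $(P,\Gamma)$ while $\update(\cdot,+)$ applies $\circ_B$ to a whole set $\Pi_1$, so you should either iterate or argue that the postulate is meant setwise; second, postulate~4 for $\circ_B$ (minimality) can, read literally, block the creation of support entries for newly visible propositions that are \emph{not} among the $P_2$'s in $\Pi_1$, which is in tension with postulate~6 --- you are right to lean on postulate~6, but you should flag that this is where the work is being done. The fixpoint-existence remark at the end is appropriate and would need to be made precise, since the paper does not supply it either.
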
  
\noindent Let us also note the following important result. 
\begin{theorem}[No recovery]  
   Let $B$ be some belief set, 
   and let $\{P\}^\diamond$ be 
   some external information. 
   Then we can find a pair 
   of $B, \{P\}^\diamond$ such that 
   both of the following fail to hold. 
   ${\pi_1(B) \subseteq \pi_1((B \fallingdotseq \{P\}^\diamond) 
       \maltese \{P\}^\diamond)}$. 
       ${\pi_1(B) \subseteq \pi_1((B \fallingdotseq 
           \{P\}^\diamond)}  
       \maltese (\visible_B(\{P\}^\diamond), \emptyset))$.  
\end{theorem}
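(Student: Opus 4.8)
The statement is existential, so the plan is to build one explicit pair $(B,\{P\}^\diamond)$ --- a formalisation of the second motivating scenario of Section~1 --- and to trace the two transition systems of Figure~\ref{representations} by hand, showing that neither inclusion is preserved. Fix three distinct atoms $P,Q,R\in\mathcal{P}$ (hence pairwise logically independent and non-tautologous), put $\{P\}^\diamond := (\{P\},\{P(Q,R,1)\})$ --- a primary proposition $P$ together with the single attributive belief $P(Q,R,1)$, whose index $n=1$ makes $\rho(P(Q,R,1))=\{P\}$ so that once $R$ surfaces it rests on $P$ alone --- and let $B$ be a consistent belief set with $\pi_1(B)=\logcon(\{P,R\})$ (so $Q\notin\pi_1(B)$), with $\pi_2(B)=\cond(\pi_1(B))$ realised through an association map satisfying $(Q,R,1)\in\mathcal{I}(P)$, and with a support table $\pi_3(B)$ whose entry for $R$ is $(R,\{P\})$. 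This $B$ is precisely the agent's state after $R$ has been triggered into consciousness and its original trigger $Q$ has already been dropped, while $R$ still hangs on the primary belief $P$.

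The first task is to verify that this $B$ is a genuine element of $\bbase$ meeting the ten belief-set axioms. Logical closure, compactness, attributive-belief adequacy and the support-adequacy and support-sanity conditions are immediate from the choice of the three components; that $(Q,R,1)$ may belong to $\mathcal{I}(P)$ is licensed because $Q,R\notin\exclude(P)$ for independent atoms; and the disjunctive and conjunctive support-propagation, support-monotonicity and tautological-support conditions only constrain the remaining table entries, which can be completed from $\Gamma_P=\{P\}$, $\Gamma_Q=\{Q\}$, $\Gamma_R=\{P\}$ by propagating along $\wedge$ and $\vee$ and by taking the support of $\top$ large enough to dominate the others. In my view this completion is the one piece of real bookkeeping; everything else is a trace.

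Next I would trace $B\fallingdotseq\{P\}^\diamond$ and then the expansion. The contraction begins with $\Gamma:=\visible_B(\{P\}^\diamond)=\{P\}$, since the quadruple cannot make $R$ visible with its trigger $Q$ absent. The first internal contraction $B\div\{P\}$ is non-deterministic, and here I make the decisive choice: take the selection function $\gamma$ that retains $R$, so $\pi_1(B\div\{P\})=\logcon(\{R\})$ while $\pi_3$ is left untouched. Now $\generate(\cdot,\div)$ flags $R$, because $\{P\}$ is $R$'s entire support and $P$ has left the belief set; $\update(\cdot,\div)$ then prunes the table via $-_B$ and $\backslash\backslash$, collapsing $R$'s entry to $(R,\emptyset)$. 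The next round must therefore contract by $R$; contracting $\logcon(\{R\})$ by $\{R\}$ via full meet yields $\logcon(\{\top\})$, and, crucially, this step genuinely dismantles the bridging formula $\neg P\vee R$ that a recovery-respecting internal contraction would otherwise have carried along. One more cleanup discards the dangling entry $(R,\emptyset)$ and the iteration reaches its fixpoint with $\pi_1(B\fallingdotseq\{P\}^\diamond)=\logcon(\{\top\})$. For the subsequent expansion, $\visible(\{P\}^\diamond)=\{P\}$ again ($Q$ is still absent), the internal expansion gives $\pi_1=\logcon(\{\top\}\cup\{P\})=\logcon(\{P\})$, and $\generate(\cdot,+)$ cannot reintroduce $R$: the only attributive belief with $R$ as latent component is $P(Q,R,1)$, which stays dormant since $Q\notin\logcon(\{P\})$. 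Hence $R\in\pi_1(B)$ while $R\notin\pi_1((B\fallingdotseq\{P\}^\diamond)\maltese\{P\}^\diamond)$, refuting the first inclusion; and since the second object $(\visible_B(\{P\}^\diamond),\emptyset)=(\{P\},\emptyset)$ carries no attributive beliefs at all, that expansion also yields $\logcon(\{P\})\not\ni R$, so the second inclusion fails too, witnessed by the same pair.

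The main obstacle is twofold. One half is the belief-set verification of $B$ above --- getting all ten axioms, and the support-table axioms in particular, to hold simultaneously. The other, more conceptual half is pinning down the role of non-determinism: the failure of recovery is \emph{not} automatic on every run, because the internal recovery postulate $B\subseteq(B\div\Gamma)+\Gamma$ would, under a careless selection function, in fact resurrect $R$ (namely whenever $\div\{P\}$ is chosen so that $R$ is already lost but $\neg P\vee R$ is retained). The proof must therefore deliberately choose the selection functions --- retain $R$ at $\div\{P\}$, then full meet at $\div\{R\}$ --- that let the dependency-driven cascade do its work, and then argue that with these choices no later expansion step can bring $R$ back.
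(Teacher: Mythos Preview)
Your proposal is correct and is essentially the paper's own argument: the same three independent atoms, the same quadruple $P(Q,R,1)$ with $\rho=\{P\}$, the same belief set with $\pi_1(B)=\logcon(\{P,R\})$ and the support entry $(R,\{P\})$, and the same conclusion that $R$ is lost in the cascade and cannot resurface because its trigger $Q$ is absent. The paper's proof is terser---it dispatches the first inclusion by citation and, for the second, simply asserts that $p_3\notin B\fallingdotseq\{p_1\}^\diamond$ and that under a suitable (non-deterministic) outcome no non-tautological $P_z\supset p_3$ survives---whereas you explicitly walk both fixpoint iterations, name the selection-function choices (retain $R$ at $\div\{P\}$, full meet at $\div\{R\}$), and identify the bridge formula $\neg P\vee R$ as the object that must be destroyed; this extra bookkeeping is sound and makes transparent exactly where the internal recovery postulate for $\div$ is circumvented by the dependency-driven second contraction.
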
 
\begin{proof} 
  The first weaker result 
  holds already in the vanilla latent 
  belief theory; Cf. \cite{Arisaka15perception}.  
  To give the evidence that 
  $\pi_1(B) \not\subseteq 
  \pi_1((B \fallingdotseq 
  \{P\}^\diamond) \maltese 
  (\visible_B(\{P\}^\diamond, \emptyset)))$, 
  suppose that our language  
  is constructed from $p_1, p_2, p_3$ 
  and the logical connectives 
  $\{\top, \bot, \wedge, \vee\}$. 
  Suppose that no pairs of  the three propositions 
  are associated by the logical consequences. 
  Now, suppose that $B$ is a belief set; $\pi_1(B) = 
  \logcon(\{p_1, p_3\})$; 
  $\pi_2(B)$ contains $p_1(p_2, p_3, 1)$ but does not 
  contain any $P_x(P_y, P_z, n)$ such that 
  $\logcon(P_z) \subseteq \logcon(p_3)$ 
  or that $\logcon(p_3) \subseteq \logcon(P_z)$; and that 
  $\pi_3(B)$ contains 
  $(p_3, \{p_1\})$, among others. 
  Then $p_3 \not\in (B \fallingdotseq \{p_1\}^\diamond)$. Suppose that 
  $B \fallingdotseq \{p_1\}^\diamond$ 
  does not retain any logical consequence 
  of $p_3$ apart from tautological propositions. 
  Then the belief set contains 
  no propositions $P_{\alpha}$ such that 
  $\logcon(P_{\alpha}) = \logcon(P_z \supset p_3) \not= 
  \logcon(\top)$. 
  Then $p_3 \not\in (B \fallingdotseq \{p_1\}^\diamond) 
  \maltese (\{p_1\}, \emptyset)$, as required. 
\end{proof}    
\noindent This result can be also adapted 
to the AGM belief theory, 
even though there are no 
quadruples in the AGM setting, so long as 
the same dependencies among 
propositions
are facilitated 
in the AGM operations of belief expansion 
and contraction. With this remark, we are positive that 
this work has truly offered a satisfactory solution to the  
recovery paradox as far as the cases similar to 
the scenarios 
in the opening examples are concerned. 
\hide{ 
We now discuss the controversy about  
the (AGM) recovery postulate. 
But let us mention for the credibility and 
the robustness of the AGM belief theory 
that 
the AGM recovery postulate 
appears reasonable: 
if the transition from $Cn(X)$ into $Cn(X) \div P$ 
is done by minimally changing $Cn(X)$, why should we not 
recover all the contents of $Cn(X)$ when $P$ is added to $Cn(X) \div P$? Is it not perhaps the very definition of 
the change being minimal? There 
are arguments against the postulate in the literature. 
However, many of the purported counter-examples, if 
seriously taken {\it within the AGM belief theory itself},
are rendered spurious. 
Let us borrow the two examples - either in principle 
or wholly - from  
\cite{Hansson14}. 
\begin{counterExample} 
    There is an agent who believes both 
$p_1:$ {\it It is Monday in New York} and $p_2$: {\it It is raining in London}. 
Suppose that his/her belief set 
is $Cn(X)$. Suppose that it is consistent.  
It is true that he/she 
believes $p_1 \vee p_2$. 
Let us tell the agent that neither $p_1$ nor $p_2$ is 
actually correct. 
The agent then drops $p_1 \vee p_2$ from his/her belief set. 
And his/her belief set changes to $Cn(X_a)$. 
Obviously, it cannot contain $p_1$ nor can it $p_2$. 
Now we 
tell the agent that at least either of the two 
beliefs is true. He/she then adds 
the same proposition $p_1 \vee p_2$ to $Cn(X_a)$, and by the recovery, 
he/she also believes $p_1$ and $p_2$. This is counter-intuitive,
because he/she should be able to 
believe $p_1 \vee p_2$ without strictly asserting 
either of them. 
\end{counterExample}  
\begin{counterExample} 
    An agent with the belief set of $Cn(X)$ believes $p_1:$ George is a criminal, 
and $p_2:$ George is a mass murderer. 
He/she drops $p_1$, which leads him/her to also drop 
$p_2$ since $p_1$ follows from $p_2$. 
He/she then accepts $p_3:$ George is 
a shoplifter. Now, at this point his/her 
belief set is characterised by $(Cn(X) \div p_1) + p_3$. 
We know that $p_1$ follows from $p_3$, and 
we consequently have that $(Cn(X) \div p_1) + p_1 \subseteq 
(Cn(X) \div p_1) + p_3$. But the recovery ensures that 
$p_2 \in (Cn(X) \div p_1) + p_1$, which means that 
$p_2 \in (Cn(X) \div p_1) + p_3$. But because 
$p_1$ follows from $p_2$, 
it follows that 
$p_1 \in (Cn(X) \div p_1) + p_3$. That is, 
he/she must believe that George is a shoplifter 
and a mass murderer, which is rather absurd. 
\end{counterExample}  
The example
ceases to be convincingly counter-intuitive if 
all the assumptions stated are truly respected. 
Recall that a belief 
set is logically closed. This means that 
$Cn(X)$ actually contains
${\neg p_1 \supset \neg p_2}$, 
${\neg p_1 \supset \neg p_2}$, 
${p_1 \supset p_2}$ and ${p_1 \supset p_2}$ among 
others. 
There is no reason that 
${p_1 \supset p_2}$ or ${p_2 \supset p_1}$ must be dropped
when both $p_1$ and $p_2$ are dropped from $Cn(X)$ 
in order to have $Cn(X) \div (p_1 \vee p_2)$, 
since they do not necessarily imply $p_1 \vee p_2$. 
Therefore, let it be that 
$p_1 \supset p_2, p_2 \supset p_1 \in 
Cn(X) \div (p_1 \vee p_2)$. This means 
that, although the agent believes 
neither $p_1$ nor $p_2$,  
he/she still believes that, if either of them is true, then 
both of them are true. 
Now, let us add $p_1 \vee p_2$ to $(Cn(X) \div (p_1 \vee p_2))$ again; then be it $p_1$ or $p_2$ 
that is true, the two beliefs 
of his/hers: $p_1 \supset p_2$ and $p_1 \supset p_2$,
ensure that his/her belief set contain $p_1$ and $p_2$. 
And there is nothing strange about it. 
The example probably meant to 
contract $Cn(X)$ by $p_x$ such that 
$Cn(X) \div p_x$ does not imply 
$(p_1 \supset p_2) \vee (p_2 \supset p_1)$ and $p_1 \vee p_2$. But then 
it is not necessary that $(Cn(X) \div p_x) + (p_1 \vee p_2)$ 
contains both $p_1$ and $p_2$. 
An argument against typical counter-arguments 
can be constructed in a similar manner. \\

Firstly, $p$ does not follow from $q$ unless 
$X$ also contains $q \supset p$, $\neg q \vee p$. Likewise,
$p$ does not follow from $r$ unless $X$ 
also contains $r \supset p$. Now, note that 
if $X$ is logically closed, as seemingly 
taken for granted in the example, 
then $p \supset q$, $q \supset p$, 
$\neg p \supset \neg q$ and $\neg q \supset \neg p$ 
are also in $X$. Now, $(X \div p)$ does not 
contain $p$ nor $q$. But $X$ contains $p \supset q$, 
and therefore $q$ is in $(X \div p) + p$. But 
it appears that the example seems to be 
}
\section{Conclusion}   
We have presented an enriched latent belief theory, 
in which belief dependencies based on 
how a latent belief has become visible 
to agents' belief sets can be expressed. We have 
shown that there is no recovery 
postulate in our belief theory, thus 
giving a solution to the 
recovery paradox that has been in the belief set theories. 
Our theory indicates that the time 
may have come to study beyond the 
G{\"{a}}rdenfors' principle \cite{Gardenfors90}: 
``If a belief state is revised by a sentence $A$,
then all sentences in $K$ that are independent 
of the validity of $A$ should be retained 
in the revised state of belief", upon which 
many current works on 
dependencies, such as \cite{Oveisi15} 
for a recent one, appear to be based. The examples in Section 1 in any case 
indicate that beliefs, even though not 
connected by logical consequences, can still be
related by their contents, which is the standpoint 
that has been already taken in \cite{Arisaka15perception}.
Although we had to present just the result, 
there appears to be some observation that has not been 
detailed in the literature around 
the recovery paradox. In another work 
of ours, which is going to be much less technical, 
we will have an overview of the paradox: 
what it was, and how it was naturally 
resolved. One future work 
will focus on applications 
of this enriched latent belief theory.  
Connection to 
theories of concurrency in 
computer science will be sought after.  
\section*{Related works}  
Several works questioned the AGM recovery postulate 
\cite{Hansson91,Makinson87,Rott00}. The attempt 
to amend it has not been successful within 
the belief set setting, however. 
\hide{ We, incidentally, do not 
necessarily uphold the view that 
there is a real problem in the AGM recovery postulate 
in itself, nor do we voice disagreement over the ground 
on which it arose. But this matter will be discussed 
in detail in the mentioned supplementary work. }
There are works on belief revision with non-classical 
logics that do not have the classical logical consequence
relations. Not surprisingly, the recovery 
property does not necessarily hold in the setting, 
as evidenced also in the belief base theory, 
which was cultivated notably by Hansson and others. 
\bibliographystyle{plain} 
\bibliography{references} 
\end{document}